\documentclass[letterpaper]{article} 
\usepackage{aaai2026}  
\usepackage{times}  
\usepackage{helvet}  
\usepackage{courier}  
\usepackage[hyphens]{url}  
\usepackage{graphicx} 
\urlstyle{rm} 
\usepackage{natbib}  
\usepackage{caption} 
\frenchspacing  
\setlength{\pdfpagewidth}{8.5in}  
\setlength{\pdfpageheight}{11in}  
%
\usepackage{algorithm}
\usepackage{algorithmic}

\usepackage[final]{pdfpages}

%
\usepackage{newfloat}
\usepackage{listings}
\DeclareCaptionStyle{ruled}{labelfont=normalfont,labelsep=colon,strut=off} 
\lstset{%
	basicstyle={\footnotesize\ttfamily},
	numbers=left,numberstyle=\footnotesize,xleftmargin=2em,
	aboveskip=0pt,belowskip=0pt,%
	showstringspaces=false,tabsize=2,breaklines=true}
\floatstyle{ruled}
\newfloat{listing}{tb}{lst}{}
\floatname{listing}{Listing}
%
\pdfinfo{
/TemplateVersion (2026.1)
}

\setcounter{secnumdepth}{0} 

%



\usepackage{amsmath}
\usepackage{amssymb}
\usepackage{mathtools}
\usepackage{amsthm}
\usepackage{booktabs}
\usepackage{multirow} 

\theoremstyle{plain}
\newtheorem{theorem}{Theorem}

\theoremstyle{definition}

\theoremstyle{remark}

\nocopyright

\title{SineLoRA$\Delta$: Sine-Activated Delta Compression}
\author {
    Cameron Gordon\equalcontrib \textsuperscript{\rm 1},
    Yiping Ji\equalcontrib \textsuperscript{\rm 1},
    Hemanth Saratchandran\equalcontrib \textsuperscript{\rm 1},
    Paul Albert \textsuperscript{\rm 2},
    Simon Lucey \textsuperscript{\rm 1}
}
\affiliations {
    \textsuperscript{\rm 1}Australian Institute for Machine Learning (AIML), University of Adelaide, Australia\\
    \textsuperscript{\rm 2}Amazon Machine Learning, Australia\\

    \{cameron.gordon, yiping.ji, hemanth.saratchandran, simon.lucey\}@adelaide.edu.au
}


\begin{document}

\maketitle

\begin{abstract}

    Resource-constrained weight deployment is a task of immense practical importance. Recently, there has been interest in the specific task of \textit{Delta Compression}, where parties each hold a common base model and only communicate compressed weight updates. However, popular parameter efficient updates such as Low Rank Adaptation (LoRA) face inherent representation limitations - which are especially pronounced when combined with aggressive quantization. To overcome this, we build on recent work that improves LoRA representation capacity by using fixed-frequency sinusoidal functions to increase stable rank without adding additional parameters. We extend this to the quantized setting and present the first theoretical analysis showing how stable rank evolves under quantization. From this, we introduce \textbf{SineLoRA$\Delta$}, a principled and effective method for delta compression that improves the expressivity of quantized low-rank adapters by applying a sinusoidal activation. We validate SineLoRA$\Delta$ across a diverse variety of domains - including language modeling, vision-language tasks, and text-to-image generation - achieving up to 66\% memory reduction with similar performance. We additionally provide a novel application of the canonical Bjøntegaard Delta metric to consistently compare adapter compression changes across the rate-distortion curve.
    

\end{abstract}

\section{Introduction}

Parameter-Efficient Fine-Tuning (PEFT) has emerged as a core component of modern machine learning pipelines \cite{pmlr-v97-houlsby19a,han2024parameterefficient}. Most PEFT methods adapt a frozen pre-trained backbone by learning a small set of task-specific parameters, often implemented as additive weight updates. Among these approaches, Low-Rank Adapters have become especially prominent, with a rapidly expanding literature \cite{hu2022lora,mao2025survey}. Recent work has sought to further reduce the number of trainable parameters by exploring alternative low-rank decompositions \cite{karimi_2021, edalati2022krona, liu2024dora, he2023parameter, ding2023sparse, albert2025randlora, kopiczko2024vera, koohpayegani2024nola}.

Recently, a new fine-tuning paradigm has emerged that enhances the expressive power of low-rank adapters by applying rank-enhancing functions component-wise. Introduced in \cite{ji2025efficient}, this approach demonstrates that applying a non-linear transformation, specifically, a fixed-frequency sinusoidal function, to a low-rank adapter can significantly increase its rank. This expressivity gain comes at no additional parameter cost, preserving the memory efficiency of LoRA while yielding higher-rank representations.

In this paper, we investigate the interaction between rank-enhancing sinusoidal non-linearities and quantization, a technique that maps full-precision parameters to a smaller set of discrete values, ideally with minimal impact on model performance \cite{han2015deep, gholami2021surveyquantizationmethodsefficient, min_2024}. Quantization is a key enabler for deploying large models on resource-constrained hardware, offering improvements in memory efficiency, computational throughput, and energy consumption \cite{gholami2021surveyquantizationmethodsefficient, Dettmers_QLORA_23, xu2024qalora, kaushal2025surprising}.

To study this interaction, we develop a theoretical framework that characterizes how the rank of an adapter changes under quantization, showing that it is tightly controlled by the rank of the original, unquantized adapter. This leads to our key insight: when the adapter has low-rank, as is the case with LoRA, quantization preserves this structure. However, by applying a component-wise sinusoidal non-linearity after quantization, we can enrich the representational capacity of the adapter, effectively compensating for the rank limitation and enabling more expressive quantized models.

This insight is particularly relevant in the context of adapter quantization, which has emerged as one of two dominant approaches in quantized fine-tuning. The first, exemplified by QLoRA \cite{Dettmers_QLORA_23, badri2023hqq}, applies quantization to the base model while maintaining high-precision adapters and activations. This approach is primarily motivated by reducing memory overhead during fine-tuning, making it feasible to adapt large language models on a single GPU \cite{Dettmers_QLORA_23}. The second approach focuses on quantizing the adapters themselves \cite{yao2023deltazip, liu2024bitdelta, isik2023gptzip, ping2024deltacome, Jie_2023}, enabling highly compact and transferable fine-tuned models. Our work follows this latter direction, showing that rank-enhancing sinusoidal functions can be easily integrated as a plug-in component into quantized adapters, significantly improving their expressivity while retaining the memory efficiency that makes adapter quantization attractive. 

To our knowledge, such rank-enhancing functions have not yet been explored in the quantization literature, and we view this work as a first step towards bridging that gap.

\noindent
Our main contributions are as follows:
\begin{itemize}
\item We provide the first theoretical analysis showing how quantization affects the stable rank of a fine-tuning adapter, and show that this rank is tightly governed by the rank of its unquantized counterpart.
\item Based on our theoretical results we demonstrate that the effects of quantization on rank can be mitigated by applying rank enhancing functions in the form of sinusoids with fixed frequencies.
\item In order to consistently evaluate compression effectiveness, we introduce a novel application of Bjøntegaard Delta (canonically used to compare image codecs) to compare the effect of PEFT compression approaches.
\end{itemize}

We evaluate our approach through extensive experiments on vision and language tasks, including Large Language Model adaptation, Vision-Language Model Adaptation, and Text-to-Image Generation. For evaluation on Commonsense Reasoning, we show that memory reductions of up to 66\% is achievable relative to full-precision LoRA models by combining quantization and a rank-enhancing sine function.

\section{Related Work}

\subsubsection{Parameter Efficient Adapters}
Parameter efficient adaptation is a common fine-tuning strategy, in which a pre-trained base model is frozen, and a minimal number of adapter weights are trained on new data ~\cite{pmlr-v97-houlsby19a}. Low-Rank Adapters are a common variant, in which the adapter comprises two low-rank matrices ~\cite{hu2022lora}. VeRA ~\cite{kopiczko2024vera}, RandLoRA ~\cite{albert2025randlora}, and NOLA ~\cite{koohpayegani2024nola} use combinations of random projections to reduce the number of parameters contained within the adapters. QA-LoRA ~\cite{xu2024qalora} produces adapters that can be merged with the quantized base model, enabling low-precision inference. 

\subsubsection{Rank-Enhancing Functions} 
The most relevant rank-enhancing adapter works related to our approach are Ji et al. (2025) and Li et al. (2024) who investigate the use of sine-nonlinearities in low-rank adaptation ~\cite{ji2025efficient, li-etal-2024-loran}. We extend this approach by considering the effect of quantization on adapter performance.

\subsubsection{Delta Compression and Quantization}
Although adapters represent a trivial proportion of the total number of parameters in a network (typically less than 1\%), a recent branch of research has focused on the specific compression of these updates. Termed \textit{Delta Compression}, this branch recognizes the practical importance of reducing the memory throughput of fine-tuned updates, which may be distributed at scale to many parties with a common base model ~\cite{isik2023gptzip, yao2023deltazip, brüelgabrielsson2025compressserveservingthousands}. Within this framework it is typical to quantize the adapters, by mapping values to a limited set of floating points. This is often combined with lossless entropy compression such as zip. The quantization works most related to our approach are GPT-Zip ~\cite{isik2023gptzip}, Delta-Zip ~\cite{yao2023deltazip}, Bit Delta ~\cite{liu2024bitdelta}, and Bi-LoRA ~\cite{Jie_2023}. ~\cite{ping2024deltacome} use a mixed-precision strategy, devoting higher precision to larger singular values. ~\cite{jiang2024deltadqultrahighdeltacompression} uses a group-wise dropout and separate quantization. ~\cite{ryu2023efficientstoragefinetunedmodels} focus on low-rank residuals. ~\cite{liu2024bitdelta} uses binary adapters for Delta Compression. Our work differs from these models through our specific focus on the rank-increasing properties of a sine adaptation within a quantized framework.

\section{Theoretical Framework}\label{sec:theory}

\subsection{Preliminaries}\label{subsec:prelims}

\subsubsection{Sine-Activated Low-Rank Adapters}

\begin{figure}
    \centering
    \includegraphics[width=\linewidth]{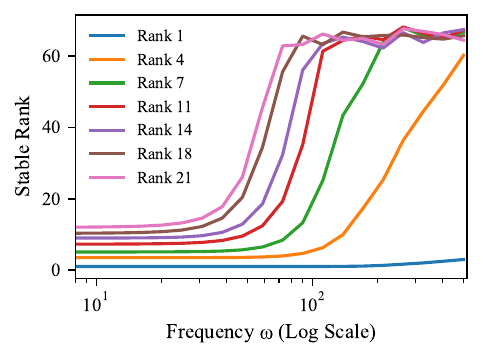}
    \caption{Effects of changing the $\omega$ term in a sine-activated low-rank matrix. Stable Rank saturates at sufficient $\omega$.}
    \label{fig:stable-rank-omega}
\end{figure}

Recent works by \cite{ji2025efficient} and \cite{li-etal-2024-loran} have explored the use of non-linear sine activations in adapter modules. Unlike common activations such as ReLU, sine functions can increase the rank of a matrix without adding additional parameters, offering a simple yet effective means of enhancing low-rank adapters. Specifically, \cite{ji2025efficient} introduced a sine-activated low-rank adapter of the form:

\begin{equation}
\frac{\sin(\omega AB)}{\gamma}
\end{equation}

where $\omega$ is a frequency parameter, $\gamma$ is a scaling factor, and $A \in \mathbb{R}^{m \times k}$, $B \in \mathbb{R}^{k \times n}$ are low-rank matrices with bottleneck dimension $k$.

\subsubsection{Stable Rank} The key insight of \cite{ji2025efficient} is that applying a sine function to the low-rank product $AB$, with large enough frequency $\omega$, 
would increase the stable rank of the matrix $AB$ which can then be used to increase the rank yielding a high rank adapter. The stable rank of a matrix $A$ is defined by: 
\begin{equation}
    \mathbf{SR}(A) := \frac{||A||_F^2}{(\sigma(A)_{\max})^2}
\end{equation}
where $||A||_F^2$ denotes the Frobenius norm and $\sigma_{\max}(A)$ the maximum singular value of $A$. Stable rank provides a softer measure of a matrix's effective dimensionality \cite{martinsson2020randomized}. Unlike the classical rank, which counts the number of nonzero singular values, the stable rank reflects how evenly the spectral energy is distributed. For instance, two matrices with identical rank can have vastly different stable ranks depending on the decay of their singular values. This nuance is critical when aiming to enhance low-rank adapters: even without increasing the classical rank, we can improve the adapter's expressivity by boosting its stable rank. This is precisely the property exploited by sine-activated adapters in \cite{ji2025efficient}. Figure \ref{fig:stable-rank-omega} shows how the stable rank is affected by changing the frequency term in a sine-activated matrix for different low-rank constraints.

\begin{figure}
    \centering
    \includegraphics[width=0.4\textwidth]{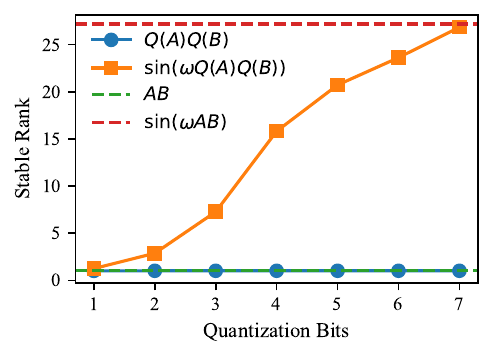}    

    \caption{A sine-activated low-rank matrix $\sin(\omega AB)$ increases the stable rank relative to a low-rank matrix $AB$. By varying quantization level $\sin(\omega(Q(A)Q(B))$ we can interpolate the effect on stable rank between these two values.}
    \label{fig:quantization_stable_rank}
\end{figure}

\subsubsection{Quantization}

A quantization function $Q(\cdot)$ maps values from a less restricted set to a more restricted set $\mathcal{A}\to\mathcal{B}$. Practically, this may involve explicit conversion of data-types (e.g. 16-bit precision to 4-bit precision), or maintaining the same data-type, but restricting the set of allowed values (e.g. mapping from $2^{16}$ discrete values to $2^4$) \cite{gholami2021surveyquantizationmethodsefficient, gray1998}. This type of quantization is common for memory compression and often coupled with an integer look-up table or an entropy coder \cite{han2015deep, Jacob_2018_CVPR}. It is conventional to define \textit{quantization error} as the residual resulting from a quantization map, which can be treated as a random variable \cite{gersho1991, gray1998}: 

\begin{equation}\label{eqn:quant_eqn}
    \epsilon = Q(A)-A
\end{equation}

For our experiments, we use a k-means quantization scheme due to its theoretical optimality and tractable implementation ~\cite{gersho1991, han2015deep}. This is implemented using the \textit{k-means1d} package ~\cite{Adam2019KMeans1D}, which provides an efficient wrapper for a fast k-means solver that runs in $O(kn + n \log n)$ for $n$ 1D data and $k$ clusters based on ~\cite{WU1991663,grønlund2018fastexactkmeanskmedians}. Further quantization experimental details are included in the Supplementary Materials. 

\subsection{Main Theorem}\label{subsec:main_theory}

In this section, we present our main theoretical result, which establishes that the stable rank of a quantized matrix is governed by the stable rank of its unquantized counterpart. We will use the notation $\sigma_{\max}$ to denote the maximum singular value of a matrix, $\sigma_{\min}$ to denote the minimum singular value and $\vert\vert \cdot\vert\vert_F$ to denote the Frobenius norm.

\begin{theorem}\label{thm:main_thm}
Let $A$ be a fixed matrix and let $Q$ denote a quantization operator so that $Q(A) = A - \epsilon$. Assume that $\sigma_{\min}(A) \leq 1$ and $\sigma_{\max}(A) >> \sigma_{\max}(\epsilon) >> 1$, so that
$\frac{\sigma_{\max}(A)}{2} \leq \sigma_{\max}(A) - \sigma_{\max}(\epsilon)$.
Then:
\begin{align}
\frac{1}{2}\bigg{(}
\sqrt{\mathbf{SR}(A)} - \frac{||\epsilon||_F}{\sigma_{\max}(A)}
\bigg{)}
&\leq 
\sqrt{\mathbf{SR}(Q(A))} \nonumber \\
&\leq 
2\bigg{(}
\sqrt{\mathbf{SR}(A)} + \frac{||\epsilon||_F}{\sigma_{\max}(A)}
\bigg{)}
\end{align} 
where $\epsilon$ is defined by \eqref{eqn:quant_eqn}.
\end{theorem}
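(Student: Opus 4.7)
The plan is to reduce the statement to two standard matrix inequalities applied separately to the numerator and denominator of $\sqrt{\mathbf{SR}(Q(A))} = \|Q(A)\|_F / \sigma_{\max}(Q(A))$, and then package the factor structure the theorem asks for.

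First I would control the Frobenius norm of $Q(A) = A - \epsilon$ by the triangle inequality for $\|\cdot\|_F$, yielding
\begin{equation}
\|A\|_F - \|\epsilon\|_F \;\leq\; \|Q(A)\|_F \;\leq\; \|A\|_F + \|\epsilon\|_F.
\end{equation}
Next I would control $\sigma_{\max}(Q(A))$ using Weyl's inequality for singular values applied to the perturbation $A \mapsto A - \epsilon$, which gives
\begin{equation}
\sigma_{\max}(A) - \sigma_{\max}(\epsilon) \;\leq\; \sigma_{\max}(Q(A)) \;\leq\; \sigma_{\max}(A) + \sigma_{\max}(\epsilon).
\end{equation}
Here the hypothesis $\sigma_{\max}(A) \geq \sigma_{\max}(\epsilon) \gg 1$ is what lets me collapse these into the clean sandwich $\tfrac{1}{2}\sigma_{\max}(A) \leq \sigma_{\max}(Q(A)) \leq 2\sigma_{\max}(A)$, by reading the $\gg$ as (at least) a factor of two; the role of $\sigma_{\min}(A)\leq 1$ appears only to rule out the degenerate well-conditioned case where $\mathbf{SR}(A)$ would be trivial.

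Combining these bounds is then mechanical. For the upper bound,
\begin{equation}
\sqrt{\mathbf{SR}(Q(A))} \;\leq\; \frac{\|A\|_F + \|\epsilon\|_F}{\tfrac{1}{2}\sigma_{\max}(A)} \;=\; 2\!\left(\sqrt{\mathbf{SR}(A)} + \frac{\|\epsilon\|_F}{\sigma_{\max}(A)}\right),
\end{equation}
and for the lower bound,
\begin{equation}
\sqrt{\mathbf{SR}(Q(A))} \;\geq\; \frac{\|A\|_F - \|\epsilon\|_F}{2\sigma_{\max}(A)} \;=\; \frac{1}{2}\!\left(\sqrt{\mathbf{SR}(A)} - \frac{\|\epsilon\|_F}{\sigma_{\max}(A)}\right),
\end{equation}
which together give exactly the stated double inequality.

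The proof is essentially two one-line applications of textbook inequalities, so there is no deep obstacle. The only place requiring care is the quantitative reading of the assumption $\sigma_{\max}(A) \gg \sigma_{\max}(\epsilon)$: one has to make this precise (e.g.\ $\sigma_{\max}(A) \geq 2\sigma_{\max}(\epsilon)$) in order to justify the explicit constants $\tfrac{1}{2}$ and $2$ in the bound, and to ensure the lower estimate $\sigma_{\max}(A) - \sigma_{\max}(\epsilon)$ from Weyl is not swamped by the perturbation. With that convention fixed at the outset, the rest of the argument is routine.
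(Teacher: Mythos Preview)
Your proposal is correct and follows essentially the same route as the paper: triangle inequality for $\|\cdot\|_F$, a singular-value perturbation bound for $\sigma_{\max}$, and then absorbing the perturbation into a factor of two using the $\gg$ hypothesis. The only cosmetic difference is that the paper writes its lower singular-value bound with $\sigma_{\min}(\epsilon)$ and uses $\sigma_{\min}(\epsilon)\leq 1$ together with $\sigma_{\max}(A)\gg 1$ to get the factor $\tfrac12$, whereas you use the cleaner Weyl form with $\sigma_{\max}(\epsilon)$ and read $\gg$ as a factor-two gap; your version is arguably tidier and uses the hypotheses more directly.
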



\begin{proof}
We recall from \eqref{eqn:quant_eqn} that we can write: 
\begin{equation}
    Q(A) = A - \epsilon
\end{equation}
were $\epsilon$ is viewed as a random noise matrix. We then use the triangle inequality to obtain:
\begin{align}
\vert\vert A\vert\vert_F - \vert\vert \epsilon\vert\vert_F\leq \vert\vert Q(A)\vert\vert_F \leq \vert\vert A\vert\vert_F + \vert\vert \epsilon\vert\vert_F. \label{eqn:Q_A_triangle}
\end{align}
Using inequalities for the maximum singular value of a matrix we have:
\begin{align}
\sigma_{\max}(A) - \sigma_{\max}(\epsilon) &\leq   
\sigma_{\max}(Q(A)) \\
&\leq \sigma_{\max}(A) + \sigma_{\max}(\epsilon).\label{eqn:sing_ineq}
\end{align}
To prove the upper bound observe that: 
\begin{align}
    \sqrt{\mathbf{SR}(Q(A))} &= \frac{||Q(A)||_F}{\sigma_{max}(Q(A))} \\
    &\leq \frac{||A||_F + ||\epsilon||_F}{\sigma_{max}(Q(A))} \text{ by } \eqref{eqn:Q_A_triangle} \\
    &\leq \frac{||A||_F + ||\epsilon||_F}{\sigma_{max}(A) - \sigma_{\min}(A)} \text{ by } \eqref{eqn:sing_ineq} \\
    &\leq 2\bigg{(}\frac{||A||_F + ||\epsilon||_F}{\sigma_{max}(A)}\bigg{)}
\end{align}
where to get the last inequality we use the assumption in the statement of the theorem. The upper bound then follows from the definition of the stable rank.

To prove the lower bound we proceed in a similar way:
\begin{align}
\sqrt{\mathbf{SR}(Q(A))} &= \frac{||Q(A)||_F}{\sigma_{max}(Q(A))} \\
&\geq \frac{||A||_F - ||\epsilon||_F}{\sigma_{max}(Q(A))} \text{ by } \eqref{eqn:Q_A_triangle} \\
&\geq \frac{||A||_F - ||\epsilon||_F}{\sigma_{max}(Q(A)) + \sigma_{\max}(\epsilon)} \text{ by } \eqref{eqn:sing_ineq} \\
&\geq \frac{1}{2}\bigg{(}\frac{||A||_F - ||\epsilon||_F}{\sigma_{max}(Q(A))}\bigg{)}
\end{align}
where the last inequality comes from the assumption that
$\sigma_{\max}(A) \geq \sigma_{\max}(\epsilon)$. The lower bound then follows from the definition of stable rank.
\end{proof}

Theorem \ref{thm:main_thm} presents the key insight of this work: the stable rank of a quantized adapter remains low if the original (unquantized) adapter has low stable rank, as the quantized stable rank is controlled by the unquantized one. This observation motivates applying a sinusoidal function, with a large frequency $\omega$, after quantization. By leveraging results from \cite{ji2025efficient}, we note that a sine function with large frequency can increase the stable rank of the quantized adapter, effectively boosting its expressivity without sacrificing quantization efficiency. This produces a high-rank adapter while retaining the compression benefits of quantization. In particular this makes applying a sinusoidal function to a post quantization framework an effective way to yield better performance while still retaining compression benefits. Figure \ref{fig:quantization_stable_rank} provides an empirical illustration of our main insight. Starting with two low-rank matrices $A$ and $B$, whose product $AB$ is also low-rank, we apply quantization $Q$ to $A$ and $B$ at varying bits. The figure plots the stable ranks of $AB$, the quantized product $Q(A)Q(B)$, the sine-activated product $\sin(\omega AB)$, and $\sin(\omega Q(A)Q(B))$. As shown, the stable rank of $\sin(\omega Q(A)Q(B))$ increases with higher quantization bits, demonstrating how sinusoidal activation can effectively restore rank after quantization.


\subsection{Bjøntegaard Delta Analysis}

Bjøntegaard Delta (BD) Analysis is a commonly applied evaluation technique for comparing video and image compression codecs \cite{Bjontegaard2001, Herglotz2022, Herglotz_2024}, and has occasionally been applied for other modalities such as Point Cloud \cite{wang2021pointcloud, Wang_2021, Herglotz_2024,barman2022bjontegaard} or Neural Radiance Field compression \cite{ji2025efficient}. The metric involves evaluating two comparison codecs at two rate and performance positions. These are interpolated, with the measure evaluated as the integral between these positions. Figure \ref{fig:BD_visualisation} shows visually how the Bjøntegaard Delta can be calculated. Standard metrics include BD-Rate (the gain in performance at a given rate) and BD-PSNR (the gain in compression at a given performance). These can be extended without difficulty to performance evaluation metrics used for language models, such as average task accuracy (BD-Accuracy). 

\subsubsection{Mathematical Description}

Given RD points \( (R_i, D_i) \), we interpolate the RD curves using a smooth function \( f(R) \). The BD-rate is computed as:
\begin{equation}
    \Delta R = A \int_{D_{\min}}^{D_{\max}} \left( f_2^{-1}(D) - f_1^{-1}(D) \right) dD
\end{equation}
Similarly, BD-Accuracy is inversely defined:
\begin{equation}
    \Delta D = B\int_{R_{\min}}^{R_{\max}} \left( f_2(R) - f_1(R) \right) dR
\end{equation}

Where $A=\frac{1}{D_{\max} - D_{\min}}$ and $B=\frac{1}{R_{\max} - R_{\min}} $.

Typically, a cubic polynomial is used for $f(\cdot)$, however recent works have argued for the use of Akima interpolation due to its increased stability \cite{Herglotz2022,Herglotz_2024}. For our evaluation we use the \textit{Bjontegaard} Python library accessible at \url{https://github.com/FAU-LMS/bjontegaard}.

\subsubsection{Applicability to Delta Compression}
Model compression is an area of relatively recent interest in contrast to more established modalities such as images and video \cite{zhu2024surveymodelcompressionlarge, Siem2024LLMEdge}. As a result evaluating compression changes has yet to be standardised. It is therefore common to present model gains visually to show Pareto improvements; or to compare performance changes at an (approximately) equivalent parameter or performance position (e.g. `\dots performance with 25\% fewer parameters'). This leads to difficulties where parameters are not directly comparable or there is variation in algorithm performance at different memory levels. In contrast, the advantage of BD analysis is that it accounts for small inconsistencies in parameters, and accounts for the natural rate-performance trade-off that occurs during compression. We suggest that BD analysis can be used to evaluate Delta Compression performance.

\begin{figure}[h!]
    \centering
    \includegraphics[width=0.75\linewidth]{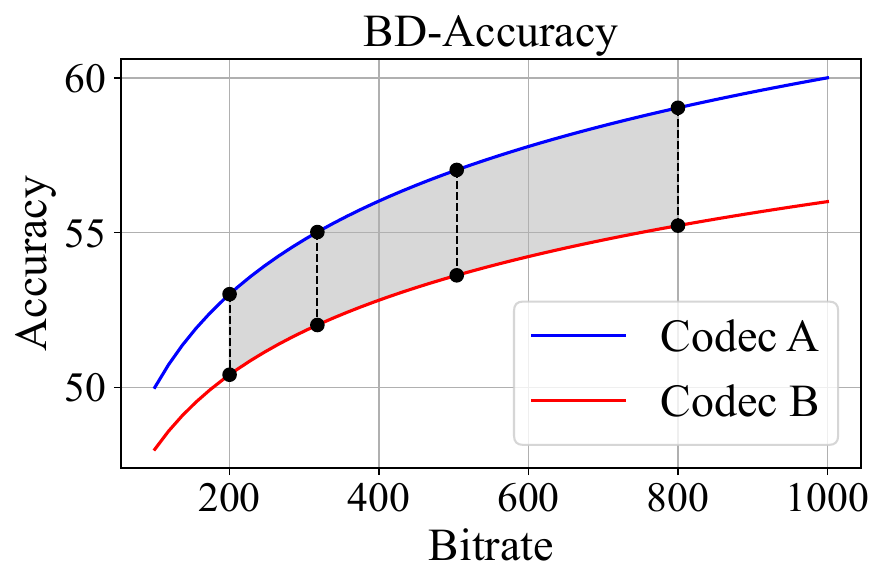}\\
    \includegraphics[width=0.75\linewidth]{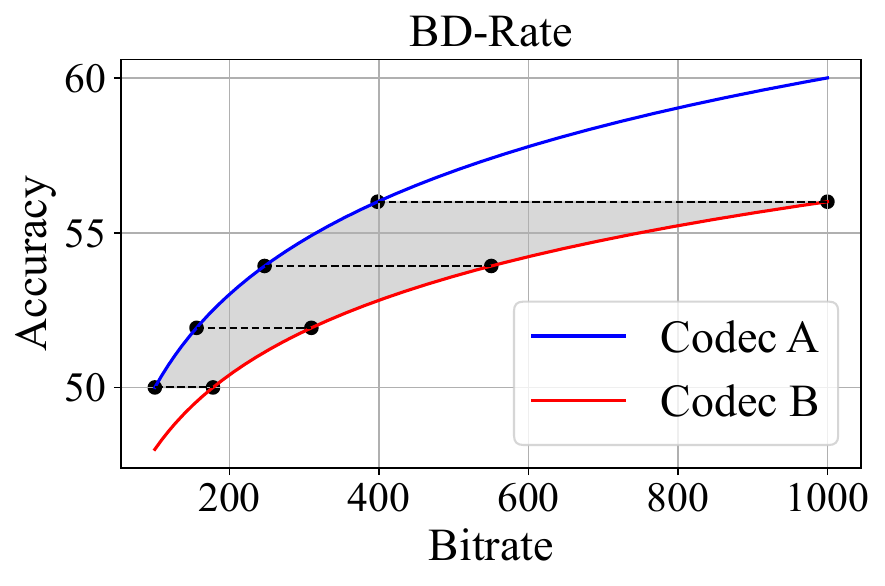}
    \caption{Bjøntegaard Delta is calculated by taking (bitrate, accuracy) pairs for two codecs, and evaluating the (horizontal or vertical) integral between interpolated performance.}
    \label{fig:BD_visualisation}
\end{figure}

\section{Results}\label{sec:results}
\subsection{Large Language Model Adaptation}

\subsubsection{Configurations} We fine-tune LLaMA 3-8B on a commonsense reasoning tasks, training the 15k dataset for 1 epoch. Following training we apply Post-Training Quantization at different bits-per-parameter, with each target tensor quantized independently. We then evaluate on each of the test sets directly, without further fine-tuning. We evaluate on a standard suite of benchmarks including BoolQ ~\cite{clark2019boolq}, PIQA ~\cite{bisk2019piqareasoningphysicalcommonsense}, SIQA ~\cite{sap2019socialiqa}, HellaSwag (HS) ~\cite{zellers2019hellaswag}, WinoGrande (WG) ~\cite{sakaguchi2021winogrande}, ARC-c, ARC-e ~\cite{clark2018think} and OBQA ~\cite{mihaylov2018can}. We use a frozen LLAMA-3-8B base model from Hugging Face ~\cite{llama3modelcard}. Each base experiment is run on one H100 GPU using a batch size of 128, and re-used for quantizing to different levels of precision. Low-rank adaptors are applied to the weight matrices $\mathbf{W}_\text{q},\mathbf{W}_\text{k},\mathbf{W}_\text{v},\mathbf{W}_\text{up},\text{and }\mathbf{W}_\text{down}$. We use the $\omega$ values in ~\cite{ji2025efficient}, who apply larger $\omega$ for low-rank models. Following ~\cite{ji2025efficient} we set $\gamma=\sqrt{n}$, where $n$ is the row dimension of the weight matrix. 

\subsubsection{Analysis.} Results are shown in Table \ref{tab:lora_sinelora_comparison}. We can note that the SineLoRA$\Delta$ model achieves significant memory compression and consistently outperforms LoRA. The memory reduction is such that the Rank 8 SineLoRA$\Delta$ at 5-bits outperforms the full-precision LoRA, with only $33.5\%$ of the memory (9.1MB to 27.1MB). Table \ref{tab:bd-llm} further validates this by examining the average performance improvements through Bjøntegaard Delta Analysis at each quantization level. At 2-bit quantization the SineLoRA$\Delta$ model shows an average $41.6\%$ memory improvement over LoRA, and an average accuracy improvement of 1.29\%. Full experimental results are recorded in Supplementary Tables 2 and 3, which ablates the effect of using a quantized base mode. Results are broadly comparable under this setting. We additionally compare results to DoRA \cite{liu2024dora}, and find that while both LoRA and SineLoRA$\Delta$ are robust to low (2-bit) quantization, the performance of DoRA degrades significantly until adapters are quantized to higher than 5-bit precision.

\begin{table}
    \centering
        
    \begin{tabular}{lccccc}
        \toprule
        & \multicolumn{5}{c}{Rank} \\
        \cmidrule(lr){2-6}
        Method & 1 & 2 & 4 & 8 & 16 \\
        \midrule
        LoRA (2-bit) & 69.7 & 71.0 & 74.7 & 75.2 & 77.3 \\
        SineLoRA$\Delta$ (2-bit) & \textbf{70.0} & \textbf{73.7} & \textbf{75.1} & \textbf{76.4} & \textbf{77.9} \\
        Memory (MB) & 0.6 & 1.1 & 2.2 & 4.3 & 8.6 \\
        \midrule
        LoRA (3-bit) & 70.0 & 73.1 & 75.5 & 76.5 & 78.4 \\
        SineLoRA$\Delta$ (3-bit) & \textbf{70.5} & \textbf{74.4} & \textbf{75.9} & \textbf{77.7} & \textbf{78.6} \\
        Memory (MB) & 0.8 & 1.5 & 3.0 & 6.0 & 11.9 \\
        \midrule
        LoRA (5-bit) & 69.4 & 73.1 & 75.6 & 76.7 & 78.6 \\
        SineLoRA$\Delta$ (5-bit) & \textbf{69.8} & \textbf{74.4} & \textbf{76.1} & \textbf{78.1} & \textbf{78.8} \\
        Memory (MB) & 1.2 & 2.3 & 4.5 & 9.1 & 18.1 \\
        \midrule
        LoRA (Full) & \textbf{73.7} & 74.8 & 76.5 & 78.0 & \textbf{79.0} \\
        SineLoRA$\Delta$ (Full) & 72.8 & \textbf{75.1} & \textbf{78.5} & \textbf{78.8} & 78.9 \\
        Memory (MB) & 3.4 & 6.8 & 13.5 & 27.1 & 54.0 \\
        \midrule 
        Parameters (M) & 1.8 & 3.5 & 7.1 & 14.2 & 28.3 \\
        \bottomrule
    \end{tabular} 

    \caption{Commonsense Reasoning performance for LoRA and SineLoRA$\Delta$ under different quantization rates. Averaged across tasks. Full refers to the typical 16-bit precision.}

    \label{tab:lora_sinelora_comparison}
\end{table}

\begin{figure*}
    \centering
    \includegraphics[width=0.32\linewidth]{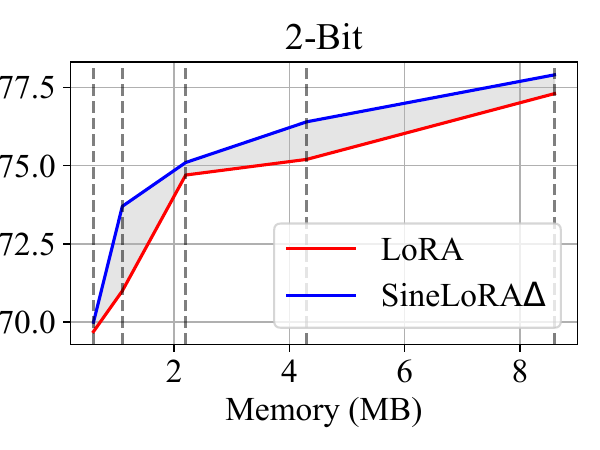}~
    \includegraphics[width=0.32\linewidth]{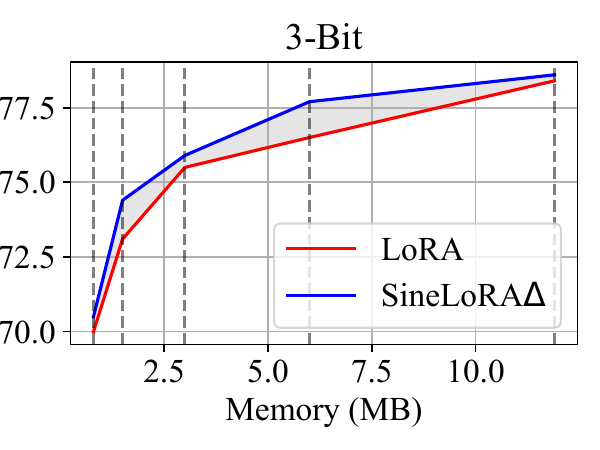}~
    \includegraphics[width=0.32\linewidth]{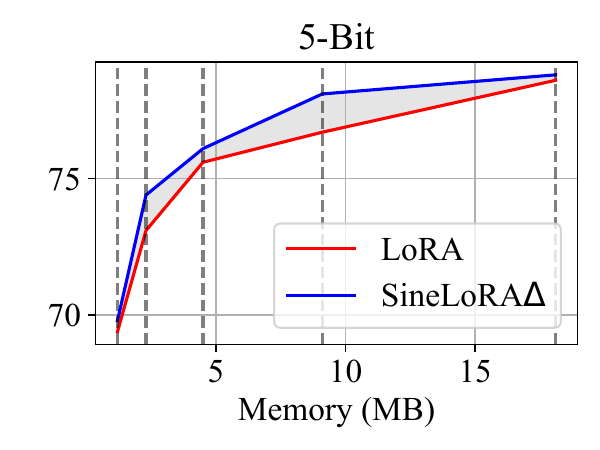}
    \caption{Commonsense Reasoning performance (average) for SineLoRA$\Delta$ and LoRA with a frozen non-quantized LLAMA-3-8B base model. SineLoRA$\Delta$ exceeds the benchmark LoRA performance across all evaluated rank and quantization levels. }
    \label{fig:area-under-curve}
\end{figure*}

\begin{table}[ht!]
    \centering
        
    \begin{tabular}{c|c|c}
         Quantization Level & BD-Rate $\downarrow$ & BD-Accuracy $\uparrow$  \\ \hline
        2 & -41.60\% & 1.29\% \\ 
        3 & -28.51\% & 0.88\% \\ 
        5 & -28.04\% & 0.96\% \\ 
        16 & -30.46\% & 0.69\% \\ \hline 
    \end{tabular}

    \caption{Bjøntegaard Delta Analysis for Table \ref{tab:lora_sinelora_comparison}, with the respective LoRA model as the baseline codec. Rate-distortion generated by keeping quantization fixed and varying the number of parameters through rank. SineLoRA$\Delta$ demonstrates improved performance at each quantization level.} 

    \label{tab:bd-llm}
\end{table}

\begin{table}[ht!]
    \centering
        
    \begin{tabular}{c|c|c}
         Quantization Level & BD-Rate $\downarrow$ & BD-Accuracy $\uparrow$  \\ \hline
        1 & 227.84\% & -3.49\% \\  
        2 & 19.89\% & -0.90\% \\ 
        3 & -15.65\% & 0.41\% \\ 
        4 & -40.74\% & 0.79\% \\ 
        5 & -42.81\% & 0.83\% \\ 
        8 & -47.56\% & 0.97\% \\ 
        16 & -44.38\% & 0.87\% \\ \hline 
    \end{tabular}

    \caption{Bjøntegaard Delta Analysis for Table \ref{tab:k-means_rank_comparison}, with the respective LoRA model as the baseline codec. Rate-distortion generated by keeping quantization fixed and varying the number of parameters through rank.}

    \label{tab:bd-vision}
\end{table}

\subsection{Vision-Language Model Adaptation}

\subsubsection{Data} We fine-tune CLIP ~\cite{clip-v139-radford21a} on 11 standard image classification datasets, obtained by following ~\cite{zhang2024knowledge}. These include: Cars ~\cite{cars2013}, DTD ~\cite{DTD2014}, EuroSAT ~\cite{eurosat2018}, Food101 ~\cite{bossard14}, Caltech101 ~\cite{caltech101_2006}, Sun397 ~\cite{xiao_sun_2016}, FGVCAircraft ~\cite{maji13fine-grained}, Flowers102 ~\cite{flowers102_2008}, ImageNet ~\cite{russakovsky_imagenet_2015}, Oxford Pets ~\cite{oxford_pets_2012}, and UCF101 ~\cite{soomro2012ucf101dataset101human}. We compare the performance of LoRA ~\cite{hu2022lora}, SineLoRA$\Delta$ ~\cite{ji2025efficient} for few-shot adaptation using a ViT-B/32 backbone following Post-Training Quantization.

\subsubsection{Configurations} Experiments are run on a NVIDIA GeForce RTX 4090 GPU with 24GB VRAM. Batch size 64, base model ViT-B/32, learning rate 0.001, weight decay 0.1, 10 epochs, AdamW optimizer ~\cite{loshchilov2018decoupled}. Fine-tuning is conducted on attention layers($\mathbf{W}_\text{q},\mathbf{W}_\text{k},\text{and } \mathbf{W}_\text{v}$) only. We finetune on few-shot tasks using 1 and 16 examples, employing different rank levels. We use $\omega = 200$ for all experiments, and $\gamma=\sqrt{n}$ where $n$ is the weight row dimension.

\subsubsection{Analysis} Tables \ref{tab:k-means_rank_comparison} and \ref{tab:bd-vision} shows our results on 1-shot classification averaged over 11 vision tasks. Consistent with the language model experiments, we observe that the SineLoRA$\Delta$ model outperforms the baseline LoRA at a similar rank and quantization. We observe that the SineLoRA$\Delta$ model only outperforms LoRA at 3-bits and higher, after which consistent compression improvements are found. This may be potentially explained by recalling Figure \ref{fig:quantization_stable_rank}, in which lower stable rank improvements are observed for very low precision (1 and 2 bit) quantization. In the Supplementary Materials we include additional ablations and comparison with DoRA \cite{liu2024dora}.

\begin{table*}
\centering

    \begin{tabular}{l c c c c c c c c c}
        \toprule
        Model & Rank & {1-Bit} & {2-Bit} & {3-Bit} & {4-Bit} & {5-Bit} & {8-Bit} & {Full} & {Params} \\
        \midrule
LoRA & 2 & \textbf{67.3} & \textbf{70.0} & 74.1 & 76.0 & 76.4 & 76.4 & 76.5 & 123K \\
SineLoRA$\Delta$ & 2 & 63.5 & 68.1 & \textbf{74.2} & \textbf{76.3} & \textbf{76.9} & \textbf{77.0} & \textbf{77.0} & 123K \\
\addlinespace
LoRA & 5 & \textbf{70.5} & \textbf{74.7} & 77.0 & 77.5 & 77.8 & 77.8 & 77.9 & 307K \\
SineLoRA$\Delta$ & 5 & 66.9 & 74.1 & \textbf{77.5} & \textbf{78.6} & \textbf{78.7} & \textbf{78.9} & \textbf{78.9} & 307K \\
\addlinespace
LoRA & 10 & \textbf{71.6} & \textbf{77.2} & 78.3 & 78.8 & 78.7 & 78.8 & 78.9 & 614K \\
SineLoRA$\Delta$ & 10 & 68.7 & 76.3 &\textbf{ 78.8} & \textbf{79.4} & \textbf{79.6} & \textbf{79.8} & \textbf{79.8} & 614K \\
\addlinespace
LoRA & 16 & \textbf{72.9} & \textbf{78.1} & 79.2 & 79.4 & 79.5 & 79.4 & 79.5 & 983K \\
SineLoRA$\Delta$ & 16 & 68.3 & 77.4 & \textbf{79.5} & \textbf{80.0} & \textbf{80.3} & \textbf{80.2} & \textbf{80.3} & 983K \\
\bottomrule
\end{tabular}
\caption{Vision-Language Model Adaptation (Averaged Over 11 Tasks) $\uparrow$}
\label{tab:k-means_rank_comparison}
\end{table*}

\subsection{Text-to-Image Generation} 

\subsubsection{Training Details}
To investigate how SineLoRA$\Delta$ performs on a text-to-image generation task, we adopt a DreamBooth fine-tuning pipeline \cite{ruiz2023dreambooth}. DreamBooth is a method for adapting text-to-image diffusion models using just a few reference images of a target object. Our experiments are performed on Stable Diffusion 3 Medium \cite{esser2024scaling}, using the official Hugging Face implementation\footnote{\url{https://github.com/huggingface/diffusers/tree/main/examples/dreambooth}}. For data, we use the DreamBooth dataset comprising 30 objects with 5-6 images per instance. For each object, we train a separate adapter. Following training, we quantize adapters to 1, 2, 3, and 5 bits using k-means quantization. These are evaluated using standard generative text prompts with 2 seeds each. For both LoRA and SineLoRA$\Delta$ we train rank 4 adapters for 300 epochs using the AdamW optimizer using a learning rate of $4\times 10^{-4}$ \cite{loshchilov2018decoupled}. For SineLoRA$\Delta$ we use a frequency $\omega=200$ and $\gamma=2\sqrt{n}$. All experiments are run on NVIDIA H100 GPUs, with each fine-tuning run taking around 7 minutes.

\begin{figure*}
    \centering
    \includegraphics[width=0.95\linewidth]{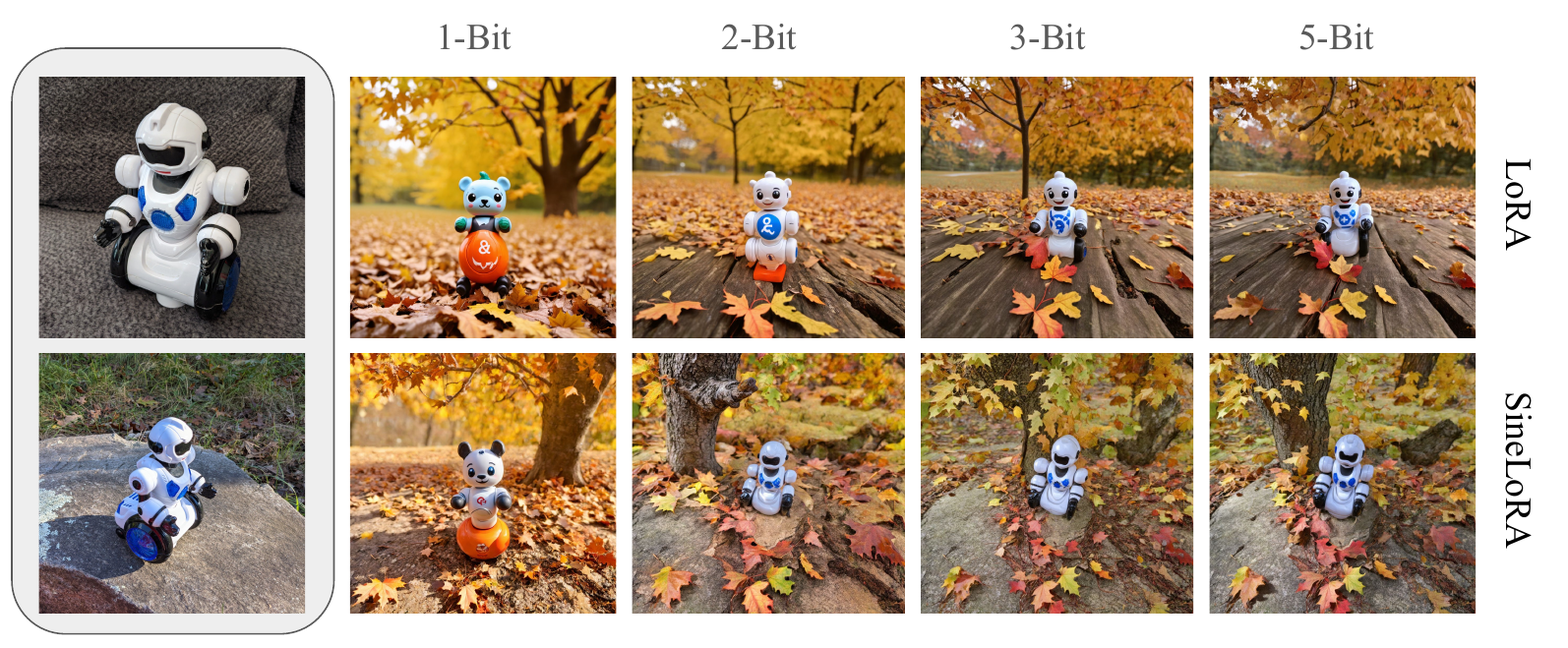}
  \caption{Dreambooth Stable Diffusion for the prompt \textbf{A toy with tree and autumn leaves in the background} for the category \textbf{robot toy}. SineLoRA$\Delta$ exhibits greater consistency with target images (left) than LoRA even at low levels of quantization.}
    \label{fig:robot_toy}
\end{figure*} 
\subsubsection{Analysis} Figure \ref{fig:robot_toy} shows a qualitative evaluation of SineLoRA$\Delta$ and LoRA trained using Dreambooth. Results show increased object fidelity for the SineLoRA$\Delta$ models, which is maintained at lower quantization levels than LoRA. Quantitatively, we follow \cite{ruiz2023dreambooth} and report the average cosine similarities between CLIP/DINO embeddings of generated images and subject images (CLIP-I and DINO), and of generated images and the text prompt (CLIP-T) \cite{clip-v139-radford21a, Caron_DINO_2021}. Table \ref{tab:stable-diffusion-clip-dino-scores} shows results averaged over all 30 categories evaluated at epoch 300. We find consistent performance improvements at each quantization level for CLIP-I and DINO, which measure the similarity to the target object. Evaluating the BD-Rate between LoRA and SineLoRA$\Delta$ we find a memory improvement of $-34.84\%$ on CLIP-I and $-29.48\%$ for DINO. Comparable performance between the two models is found CLIP-T, with a small 5\% improvement to the baseline LoRA. This is consistent with our qualitative results as CLIP-I and DINO measure how accurately the adapter has managed include the target object in the scene, while CLIP-T indicates how closely the overall scene matches the text prompt. We observe that 1-bit for both models has less fidelity to the fine-tuned target image, and appears dominated by the prompt. We attribute this to the increased dominance of the base model weights for generation. We provide additional qualitative results and analysis on individual category performance in the Supplementary Materials.

\begin{table}[t]
\centering

\begin{tabular}{llccc}
\toprule
\textbf{Bits} & \textbf{Model} & \textbf{CLIP-I $\uparrow$} & \textbf{CLIP-T $\uparrow$} & \textbf{DINO $\uparrow$} \\
\midrule
\multirow{2}{*}{1} 
& LoRA      & 0.729 & \textbf{0.219} & 0.515 \\
& SineLoRA$\Delta$  & \textbf{0.746} & \textbf{0.219} & \textbf{0.554} \\
\midrule
\multirow{2}{*}{2} 
& LoRA      & 0.768 & 0.218 & 0.599 \\
& SineLoRA$\Delta$  & \textbf{0.780} & \textbf{0.219} & \textbf{0.616} \\
\midrule
\multirow{2}{*}{3} 
& LoRA      & 0.780 & 0.218 & 0.621 \\
& SineLoRA$\Delta$  & \textbf{0.785} & \textbf{0.219} & \textbf{0.625} \\
\midrule
\multirow{2}{*}{5} 
& LoRA      & 0.783 & \textbf{0.219} & 0.626 \\
& SineLoRA$\Delta$  & \textbf{0.787} & \textbf{0.219} & \textbf{0.629} \\
\midrule
\multirow{2}{*}{Full} 
& LoRA      & 0.784 & \textbf{0.321} & 0.626 \\
& SineLoRA$\Delta$  & \textbf{0.790} & 0.317 & \textbf{0.632} \\
\bottomrule
\end{tabular}

\caption{Comparison of LoRA and SineLoRA$\Delta$ for Text-to-Image Generation. Best scores for each bit-width group and metric are highlighted in \textbf{bold}.}
\label{tab:stable-diffusion-clip-dino-scores}

\end{table}

\section{Discussion and Limitations}

\subsection{Quantization Aware Training}
Experimentally we have applied a Post-Training Quantization pipeline, which compresses weights following training. This has practical computational advantages as it allows evaluation of full rate-distortion curves without retraining at individual bit-rates. It is worth noting that improvements in performance are often possible by using Quantization Aware Training, which apply quantization during the training procedure \cite{gholami2021surveyquantizationmethodsefficient,rastegariECCV16}. While a systematic exploration of this is independent of our research question we note this as a direction for future research.

\subsection{Inference Precision}
The quantization scheme we have employed maps tensors to a restricted set of float values (e.g. $2^4$ values for 4-bit quantization), without recasting tensor data-types \cite{gholami2021surveyquantizationmethodsefficient}. This is commonly employed in memory compression for efficient data transfer. As both inference and training are conducted in the original data-type, it can be easily applied without modified memory types. However, this does not exploit GPU-level optimizations available for alternative data-types \cite{gholami2021surveyquantizationmethodsefficient, Dettmers_QLORA_23}. Combining our approach with methods such as QA-LoRA which enable INT-4 inference may lead to additional efficiency improvements \cite{xu2024qalora}.

\section{Conclusion}
\label{sec:conclusion}

In this work we have presented SineLoRA$\Delta$, a simple and effective enhancement for quantized low-rank adapters that significantly improves expressivity without introducing additional parameters. Our key theoretical insight is that stable rank under quantization is bounded by that of the original adapter - highlighting an inherent limitation of LoRA in compressed regimes. By applying fixed-frequency sinusoidal functions post-quantization, we demonstrate both analytically and empirically that these rank limitations can be mitigated. Across a diverse set of tasks - including large language model tuning, few-shot vision classification, and text-to-image generation - SineLoRA$\Delta$ delivers consistent improvements in accuracy at significantly reduced memory cost. We further propose the use of the Bjøntegaard Delta metric to evaluate compression-performance trade-offs in PEFT settings, providing a principled framework for comparing adapter methods along the rate-distortion curve. While our experiments focus on post-training quantization, the proposed method is modular and readily compatible with quantization-aware training or low-precision inference schemes. We view this work as a step toward scalable and bandwidth-efficient model adaptation, with potential applications in federated or multi-tenant deployment settings.

\appendix

\section*{Acknowledgments}
This research publication was supported in part by the CommBank Centre for Foundational AI Research.

\bigskip

\bibliography{aaai2026}

\clearpage

\includepdf[pages=-]{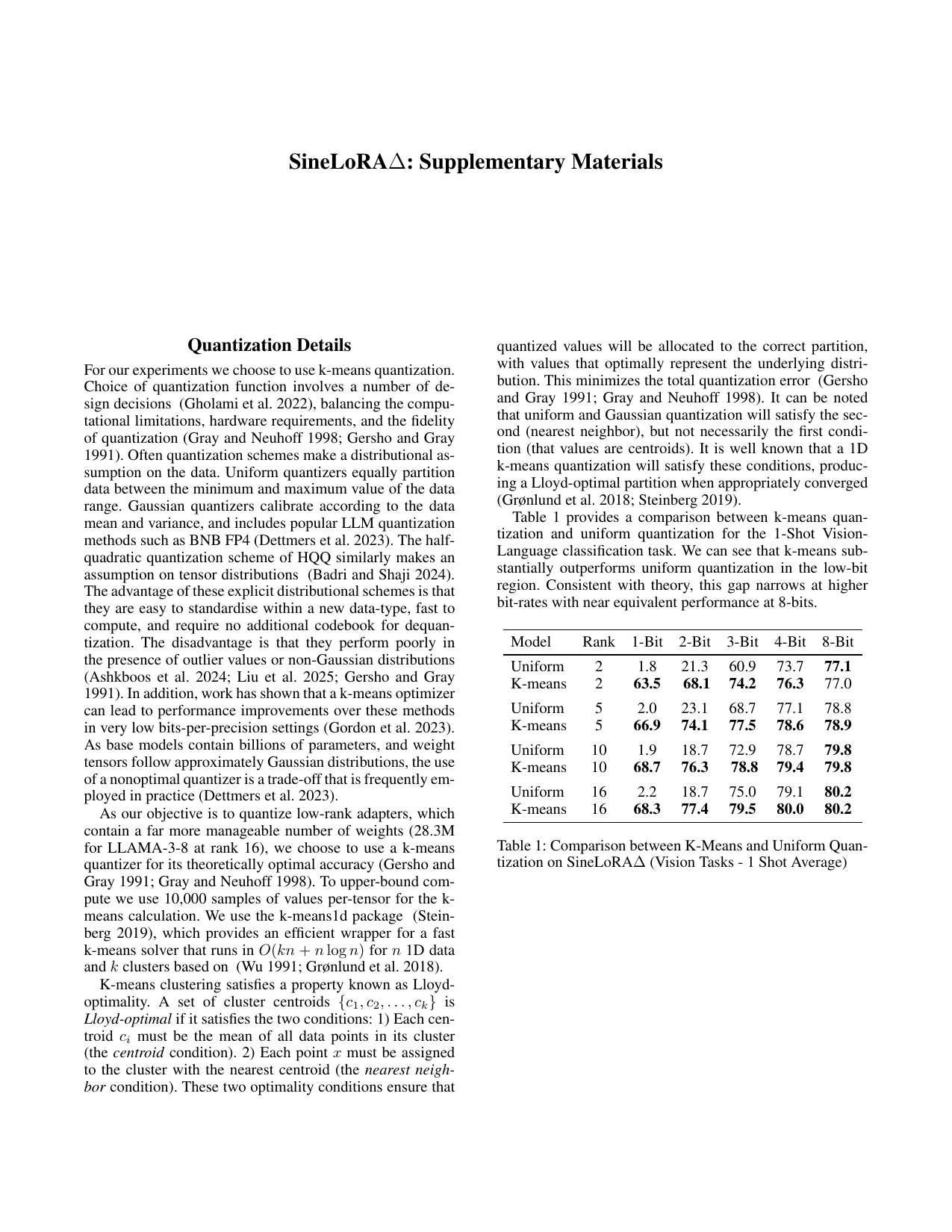}

\end{document}